\documentclass{article}

\usepackage{arxiv}

\usepackage[toc,page]{appendix}
\usepackage[utf8]{inputenc} 
\usepackage[T1]{fontenc}    
\usepackage{hyperref}       
\usepackage{url}            
\usepackage{booktabs}       
\usepackage{amsfonts}       
\usepackage{nicefrac}       
\usepackage{microtype}      
\usepackage{lipsum}

\usepackage{amsmath,amssymb,bm,comment,environ,grffile,latexsym,verbatim,xspace} 

\makeatletter
\@ifpackageloaded{float}{}{
  \usepackage{floatrow}
  \floatsetup[table]{capposition=top}
}
\makeatother

\makeatletter
\@ifpackageloaded{cite}{}{
  \@ifpackageloaded{natbib}{}{\usepackage[sort&compress,numbers]{natbib}}
}
\makeatother

\makeatletter
\@ifpackageloaded{algorithmic}{}{\usepackage[noend]{algpseudocode}}
\@ifpackageloaded{algorithm}{}{\usepackage[ruled]{algorithm}}
\makeatother


\usepackage{graphicx} 
\usepackage{grffile} 

\makeatletter
\@ifclassloaded{beamer}{
  \usepackage[compatibility=false]{caption}  
  }{
  \usepackage{caption}
}
\makeatother

\usepackage[labelformat=simple]{subcaption}
\captionsetup[subtable]{position=top}


\usepackage{multirow}

\usepackage{tikz}
\usepackage{pgfplots}
\usetikzlibrary{shapes,shapes.arrows,shapes.gates.logic.US,trees,positioning,arrows,automata,decorations.pathmorphing,chains}
\usetikzlibrary{shadows,calc,backgrounds,patterns,fit}
\usetikzlibrary{matrix,arrows}

\usepackage{url}

\makeatletter
\@ifpackageloaded{hyperref}{}{
  \usepackage[colorlinks,linkcolor={red!80!black},citecolor={blue!80!black},urlcolor={blue!80!black}]{hyperref}
}

\@ifclassloaded{beamer}{
  \usepackage{amsfonts}
  }{
  \usepackage{times}
  \usepackage{mathrsfs,dsfont} 
  \usepackage{mathbbol}
  \DeclareSymbolFontAlphabet{\mathbbl}{bbold}
  \usepackage{amsfonts}
  \DeclareSymbolFontAlphabet{\mathbb}{AMSb}

  \usepackage{enumitem}
  \setlist{noitemsep}
  \setlist[1]{noitemsep}
  \setlist[1]{nosep}

  \newlist{compactitem}{itemize}{3}
  \setlist[compactitem]{topsep=0pt,partopsep=0pt,itemsep=0pt,parsep=0pt}
  \setlist[compactitem,1]{label=\textbullet}
  \setlist[compactitem,2]{label=---}
  \setlist[compactitem,3]{label=*}

  \newlist{compactdesc}{description}{3}
  \setlist[compactdesc]{topsep=0pt,partopsep=0pt,itemsep=0pt,parsep=0pt}

  \newlist{compactenum}{enumerate}{3}
  \setlist[compactenum]{topsep=0pt,partopsep=0pt,itemsep=0pt,parsep=0pt}
  \setlist[compactenum,1]{label=\arabic*}
  \setlist[compactenum,2]{label=\alph*}
  \setlist[compactenum,3]{label=\roman*}
}
\makeatother

\usepackage{etex}  
\usepackage{color}
\usepackage{mathtools}
\usepackage{ifthen}
\usepackage{etoolbox}

\usepackage{xr}
\makeatletter
\newcommand*{\addFileDependency}[1]{
        \typeout{(#1)}
        \@addtofilelist{#1}
        \IfFileExists{#1}{}{\typeout{No file #1.}}
}
\makeatother


\input{defs.tex}


\newcommand{\R}[1]{\Rcal_w\rbr{#1}}


\newcommand{\fig}{.}

\newcommand{\assistant}{{\textit{Assistant}}\xspace}
\newcommand{\boss}{{\textit{Boss}}\xspace}
\newcommand{\autoassist}{{\textit{AutoAssist}}\xspace}

\newcommand{\uniform}[1]{\mathtt{uniform}\rbr{#1}}
\ifx\proposition\undefined
\newtheorem{proposition}{Proposition}
\fi
\title{AutoAssist: A Framework to
Accelerate Training of Deep Neural Networks}

\author{
  Jiong Zhang\\
  University of Texas at Austin\\
  \texttt{zhangjiong724@utexas.edu} \\
   \And
  Hsiang-fu Yu \\
  Amazon\\
  \texttt{rofu.yu@gmail.com} \\
   \And
  Inderjit S. Dhillon \\
  University of Texas at Austin \& Amazon\\
  \texttt{inderjit@cs.utexas.edu}
}

\begin{document}
\maketitle
\begin{abstract}
Deep neural networks have yielded superior performance in many
applications; however, the gradient computation in a deep model with millions of
instances leads to a lengthy training process even with modern
GPU/TPU hardware acceleration.  In this paper, we propose \autoassist, a
simple framework to accelerate training of a deep neural network.
Typically, as the training procedure evolves, the amount of improvement in the current model by a
stochastic gradient update on each instance varies dynamically .
In \autoassist, we utilize this fact and design a simple {\em instance shrinking} operation, which is
used to filter out instances with relatively low marginal improvement to the
current model; thus the computationally intensive gradient computations are
performed on informative instances as much as possible. We prove that
the proposed technique outperforms vanilla SGD with existing importance
sampling approaches for linear SVM problems, and establish an $O(1/k)$
convergence for strongly convex problems. In order to apply the proposed
techniques to accelerate training of deep models, we propose to jointly train
a very lightweight \assistant network in addition to the original deep network referred to as \boss.
The \assistant network is designed to gauge the importance of a given
instance with respect to the {\em current} \boss such that a shrinking operation can
be applied in the batch generator. With careful design, we train the \boss and
\assistant in a nonblocking and asynchronous fashion such that
overhead is minimal. We demonstrate that
\autoassist reduces the number of epochs by $40\%$ for training a ResNet to reach the same test
accuracy on an image classification data set, and saves $30\%$ training time 
needed for a transformer model to yield the same BLEU scores on a translation
dataset.
\end{abstract}

\vspace{-0.2cm}
\section{Introduction}
Deep neural networks trained on a large number of instances have been 
successfully applied to many real world applications, such 
as~\cite{collobert2008unified,he2016deep} and~\cite{langkvist2014review}. Due to 
the increasing number of training instances and the increasing complexity of 
deep models, variants of (mini-batch) stochastic gradient descent (SGD) are 
still the most widely used optimization methods because of their simplicity and 
flexibility. In a typical SGD implementation, a batch of instances is 
generated either by using a random permuted order or a uniform sampler. Due to 
the complexity of deep models, the gradient calculation is usually extremely 
computationally intensive and requires powerful hardware (such as a GPU or TPU) to 
perform the entire training in reasonable time. At any given time 
of the learning, each instance has its own utility in terms of improving the 
current model. As a result, performing SGD updates on a batch of instances 
which are sampled/generated uniformly is suboptimal in terms of maximization 
of return-on-investment~(ROI) of GPU/TPU cycles. In this paper, we propose  
\autoassist, a simple framework to accelerate training deep models with an
\assistant that generates instances in a sequence which attempts to maximize the ROI.~\nocite{chang2017active,kim2018screenernet,bengio2015scheduled}

There have been attempts to improve the training speed of deep learning. 
In \cite{bengio2009curriculum}, curriculum learning~(CL), where ``easier'' instances are 
presented earlier than ``harder'' instances, was shown to be beneficial to the 
overall convergence; however, prior knowledge of the training set is required to 
rank the instances by its simplicity or difficulty. Self-paced 
learning~(SPL)~\cite{kumar2010self} is another attempt that infers the 
``difficulty'' of instances based on the corresponding loss value during  
training and decreases the sample probability of these difficult instances, 
which, however, ends up overly emphasizing the easier samples. However, the marginal 
gain of information provided by these easier instances decreases as the model gets 
better, which leads to slow convergence of SPL. 
\cite{jiang2015self}~combined the above two ideas and proposed Self Paced Curriculum 
learning~(SPCL), which is formulated as an optimization 
problem which utilizes both prior knowledge and the loss values 
as the learning progresses.  
However SPCL relies on a manually chosen scheme function which has $O(N)$ 
parameters to be learned, where $N$ is number of instances in the dataset, which introduces a considerable overhead into the 
training process in terms of both 
time and space complexity. 

In this paper, we propose \autoassist, a simple framework to accelerate 
training deep models  with an \assistant that generates instances in a sequence 
which attempts to maximize the ROI. In particular, the main model, referred to as the \boss, 
is trained with instances generated by a light-weight \assistant, which 
yields instances in a sequence that tries to maximize the ROI for the current \boss. In 
particular, the \assistant is designed in a way to adapt to the changes in 
\boss dynamically and asynchronously. 
%
Our contributions in this paper are as follows. 
\begin{compactitem}
  \item We propose an \autoassist learning framework with an \assistant which 
    can shrink less informative instances and generate smart batches in an ROI aware sequence to the \boss to perform SGD 
    updates. We also propose a computational scheme so that 
    learning of the \boss and the \assistant are done asynchronously, which minimizes 
    the overhead introduced by the \assistant.
  \item We prove that even with biased stochastic gradient, the instance 
    shrinking method based on gradient magnitude can still 
    guarantee the $O(\frac{1}{k})$ convergence achieved by plain SGD under the strongly convex setting. 
  \item We empirically show that the proposed \autoassist framework leads to improved 
      convergence as well as test loss on various deep 
      learning applications, including image classification and neural machine translation.
\end{compactitem}

\vspace{-0.2cm}
\section{Related Work}
Considerable research has been conducted to optimize the way data is presented to 
the optimizer for deep learning.
For example, curriculum learning~(CL)~\cite{bengio2009curriculum}, which presents 
easier instances to the model before hard ones, was shown to be beneficial to the overall convergence; however, 
prior knowledge of the training set is required to decide the curriculum. 
Self-paced learning~(SPL)~\cite{kumar2010self} infers the 
difficulty of instances with the corresponding loss value and then decreases
the sample probability of difficult instances. Self-paced Convolutional 
Networks~(SPCN)~\cite{li2017self} combines the SPL algorithm with the training 
of Convolutional Neural Networks to get rid of noisy data. However, the SPL 
type methods usually result in over emphasis of the easier instances and thus 
harm performance. Similar ideas have been developed when optimizing for other machine learning models.
In classical SVM models, methods have been proposed to ignore trivial instances by dimension shrinking in 
dual coordinate descent. This accelerates the convergence speed by saving many 
useless parameter updates. 

Importance sampling is another type of method that has been proposed to 
accelerate SGD convergence. In importance sampling methods, instances are 
sampled by its importance weights. 
\cite{zhao2015stochastic} proposed Iprox-SGD that uses importance sampling to 
achieve variance reduction. 
The optimal importance weight distribution to reduce the variance of the 
stochastic gradient is proved to be the gradient norm of the
sample~\cite{needell2014stochastic,zhao2015stochastic,alain2015variance}. 
Despite the variance reduction effect, importance sampling methods tend to 
introduce large computational overhead.
Before each stochastic step, the 
importance weights need to be updated for all instances which makes 
importance sampling methods infeasible for large datasets.
\cite{katharopoulos2017biased} proposed an importance sampling scheme for deep 
learning models; however, in order to reduce computation cost for evaluating 
importance scores, the proposed algorithm applied a sub-sampling technique, 
thus leading to reliance on outdated importance scores for training. 

There are also several recent methods that propose to train an 
attached network with the original one. ScreenerNet~\cite{kim2018screenernet} 
trains an attached neural network to learn a scalar weight for each training 
instance, while MentorNet~\cite{jiang2017mentornet} learns a data-driven curriculum 
that prevents the main network from over-fitting. Since the additional 
model is another deep neural network, these types of methods introduce 
substantial computational and memory overhead to the original training process. 

Unlike previous methods, our proposed \autoassist framework (1)~does not need 
prior knowledge of the task, (2)~is able to utilize the response of the latest deep model to 
pick informative training instances, and (3) can greatly reduce overhead 
through CPU/GPU asynchronous training.

\vspace{-0.2cm}
\section{Dual coordinate shrinking \textit{vs} primal instance shrinking}
\renewcommand{\R}{\mathcal{R}}
\newcommand{\E}{\mathbb{E}}

In this section, we motivate our methodology by introducing the shrinking algorithm in dual coordinate 
descent of support vector machines~(SVMs) and showing that it is the same as 
ignoring certain instances in primal stochastic gradient descent. 
\vspace{-0.2cm}
\subsection{Shrinking in dual coordinate descent}
Consider the SVM for data classification. Given a set 
of instance-label data pairs $\{\xb_i,y_i\}_{i\in[N]}$ where $\xb_i\in \R^{n}$ and 
$y_i\in\{+1,-1\}$, SVM tries to solve the following minimization problem:
\begin{align}\label{primal_svm}
  \min_{\wb} \frac{1}{2}\wb^\top \wb + C\sum_{i=1}^N l(\wb,\xb_i,y_i),
\end{align}
where $C>0$ and loss function $l$ defined as:
\begin{align}
  l(\wb,\xb_i,y_i) = \max(1-y_i \wb^\top \xb_i,0).
\end{align}
The form \eqref{primal_svm} is often referred to as the primal 
form of SVM. By Representer Theorem, the primal parameter can be written as:
\begin{align}
  \wb = \sum_{j=1}^N \alpha_j y_j \xb_j
\end{align}
for some $\alphab\in\R^{N}$, thus we can solve the following dual form instead:
\begin{align}\label{dual_svm}
   \min_{\alphab} &\>  f(\alphab) = \frac{1}{2}\alphab^\top \Qb \alphab - \eb^\top \alphab \\
   s.t. &\>  0\le \alpha_i \le C, \forall i \notag,
\end{align}
where $Q_{ij}=y_i y_j \xb_i^\top \xb_j$. The above dual form is usually solved by cyclic
coordinate descent; during training step $k$, the update rule for coordinate~$i$ is given by:
\begin{align}\label{dual_update}
  \alpha_i^{k+1} = \min\left(\max\left(\alpha_i^{k} - \frac{\nabla_i f(\alphab^k)}{Q_{ii}}, 
  0\right), C\right).
\end{align}
It is important to note that the $i$-th coordinate of the dual parameter 
$\alphab\in\R^N$ corresponds to instance 
$(\xb_i, y_i)$ in the primal form. In update rule~\eqref{dual_update}, because of 
the constraint on $\alphab$, it is likely that an $\alpha_i$ stays anchored at $0$ or $C$ 
for many iterations~(an instance far from the decision boundary). As the algorithm 
converges, more dimensions of $\alphab$ will be at the constrained 
boundary and thus lead to many redundant computations. Thus the algorithm can 
be made more efficient by shrinking the parameter dimension in dual 
space~\cite{hsieh2008dual}. Let $A\subseteq[N]$ be the set of dual 
coordinates to be ignored and $B=[N] \backslash A$ be the subset after removing $A$.
Then the shrunk dual problem is:
\begin{align}\label{dual_svm}
  \min_{\alphab_B} &\>  f(\alphab) = \frac{1}{2}\alphab_B^\top \Qb \alphab_B + ( 
  \Qb_{BA} \alphab_A - \eb_B)^\top \alphab_B \\
  s.t. &\>  0\le \alpha_i \le C, \forall i\in B. \notag
\end{align}
Solving the above shrunk dual form with coordinate descent is equivalent to solving 
the primal form~\eqref{primal_svm} with stochastic gradient descent without 
considering the subset of trivial instances $\{\xb_i,y_i\}_{i\in A}$. In many 
applications, the dual variable $\alphab$ is usually sparse since most data 
points are far from the decision boundary, and hence the savings from shrinking are considerable. 
\vspace{-0.2cm}
\subsection{Instance Shrinking in Stochastic Gradient Descent}

The above dual shrinking method can largely save time and space while dealing 
with large data. This motivates a similar shrinking 
method for solving the primal problem. Generally speaking, given dataset 
$\{\xb_i, y_i\}_{i\in[N]}$ and objective function $f(\wb, \xb,y)$ parameterized by $\wb$, we would 
like to solve the minimization problem:
\begin{align}\label{obj_general}
  \min_\wb \frac{1}{N}\sum_{i=1}^N f(\wb, \xb_i, y_i)
\end{align}
In many applications, stochastic gradient descent is often used when $N$ is very 
large. At each stochastic gradient step, an instance $(\xb_i,y_i)$ or a batch of 
instances $\{\xb_i,y_i\}_{i\in B}$
are sampled from the training data and a gradient descent step is conducted 
based on the stochastic gradient. However, presenting all data to the 
optimizer will distract the optimizer from focusing on the most important 
instances. We seek to build a smart batch 
generator that can select the most informative batches given the model 
condition, thus accelerating convergence to save training time.
Similar to the shrinking algorithm in the dual problem, we can 
apply a shrinking algorithm by ignoring trivial instances at the current training stage.
The criteria to decide whether an instance is trivial or not can be 
the objective function value $f(\wb,\xb_i, y_i)$ or the gradient magnitude 
$\nabla f(\wb,\xb_i,y_i)$. Specifically, a threshold
$T$ can be set so that 
any instance that has gradient magnitude lower than $T$ is ignored:
\begin{align}\label{shrinking_grad}
  \wb_{k+1} = \wb_k - \eta \nabla f(\wb_k, \xb_i, y_i)\mathbf{I}(\|\nabla 
  f(\wb_k, \xb_i, y_i)\|\ge T).
\end{align}
\vspace{-0.2cm}
\subsection{Importance sampling and computational overhead}

Similar ideas have also appeared in importance sampling techniques, where the data 
are sampled with importance weights rather than uniformly. 
Per sample gradient norm is usually used as the importance weight and has been 
proved to be the optimal importance weight distribution~\cite{needell2014stochastic,zhao2015stochastic,alain2015variance}.
However, as the model changes after every parameter update, importance weights 
need to be updated, which is computationally prohibitive.
A pre-sampling technique is sometimes used to tackle this issue.
First a subset of data $C\subset [N]$~(with $|C|\ll N$) is uniformly sampled and importance 
scores are evaluated only on $C$. After 
training enough number of batches on $C$, another chunk is sampled and 
evaluated. This can reduce the computational overhead but may introduce new 
issues. Firstly, the importance weights are fixed once evaluated and 
may be outdated after parameter updates. 
In real applications with large data, the model can evolve substantially even 
within one epoch through the data.  
Secondly, substantial computational overhead is introduced even with the sub-sampling technique. 
As for the shrinking method, at each training step, only the objective function of the selected instances needs to be evaluated rather than the whole 
dataset. This still introduces certain amount of overhead when most instances 
can be ignored. However, as shown in Figure~\ref{svm_tests}, shrinking is able 
to have better performance compared to plain SGD in terms of the number of
parameter updates and have similar performance in terms of training time, 
whereas importance sampling method is not applicable due to its large overhead. 
In the case of SVM training, the computation saved by shrinking 
method is negligible due to the simplicity of the SVM model. However, when it comes to deep learning models, the 
reduction in computation cost is significant. In Section~\ref{autoassist}, we 
describe a deep learning training pipeline that is motivated by the shrinking 
method such that the computational overhead is negligible. 

\begin{figure*}
  \centering
  \includegraphics[width=0.246\textwidth]{\fig/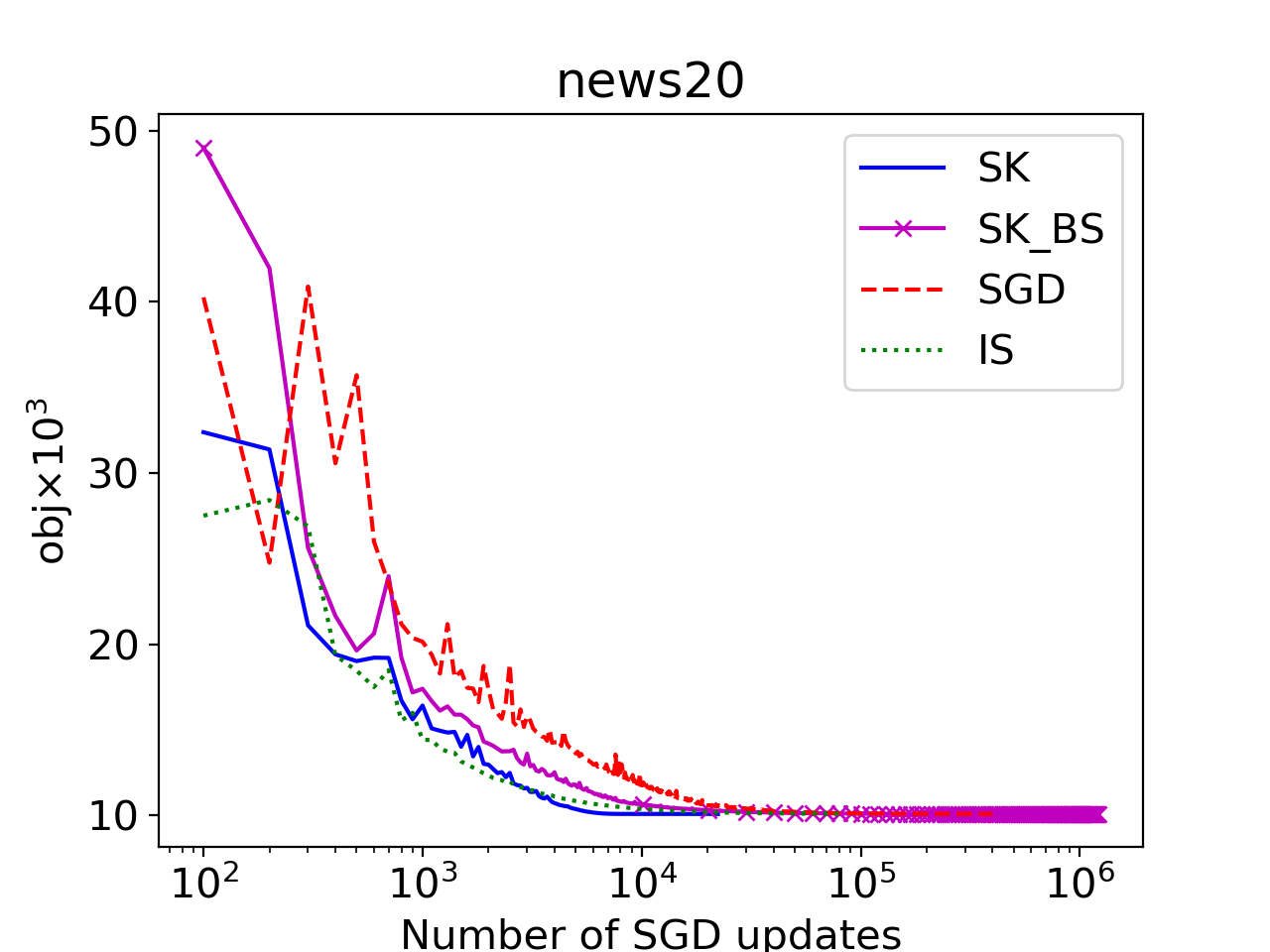}
  \includegraphics[width=0.246\textwidth]{\fig/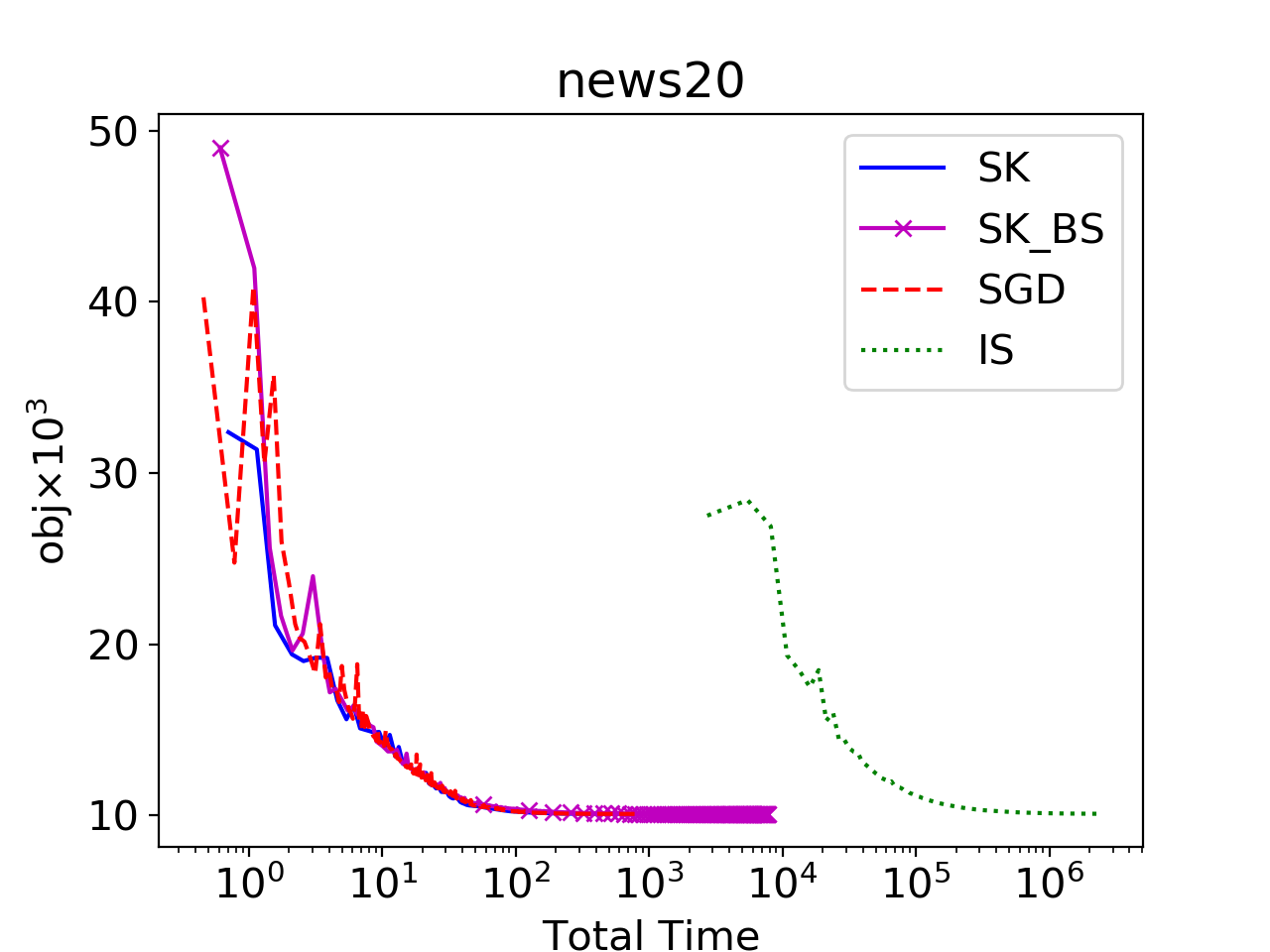}
  \includegraphics[width=0.246\textwidth]{\fig/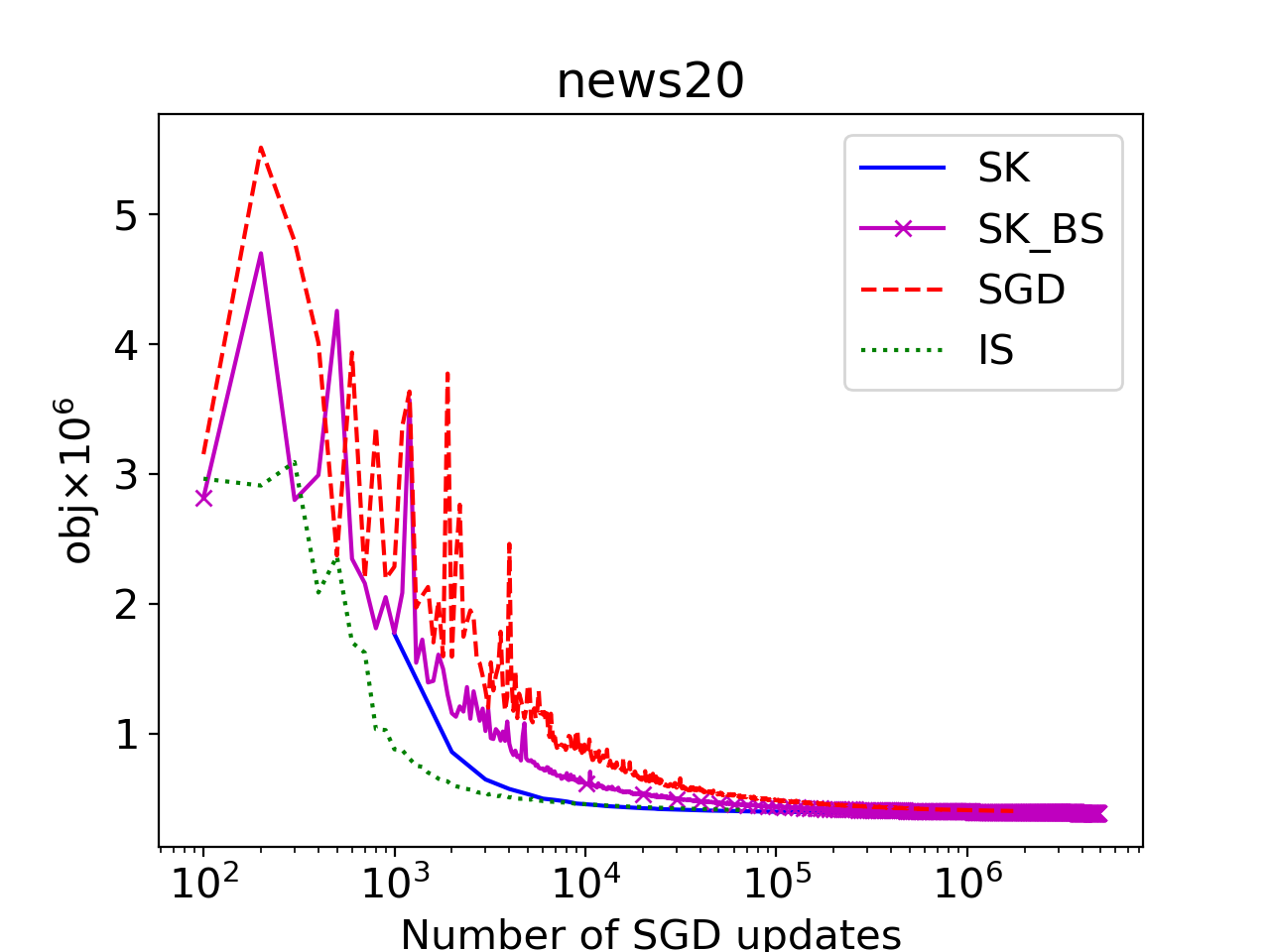}
  \includegraphics[width=0.246\textwidth]{\fig/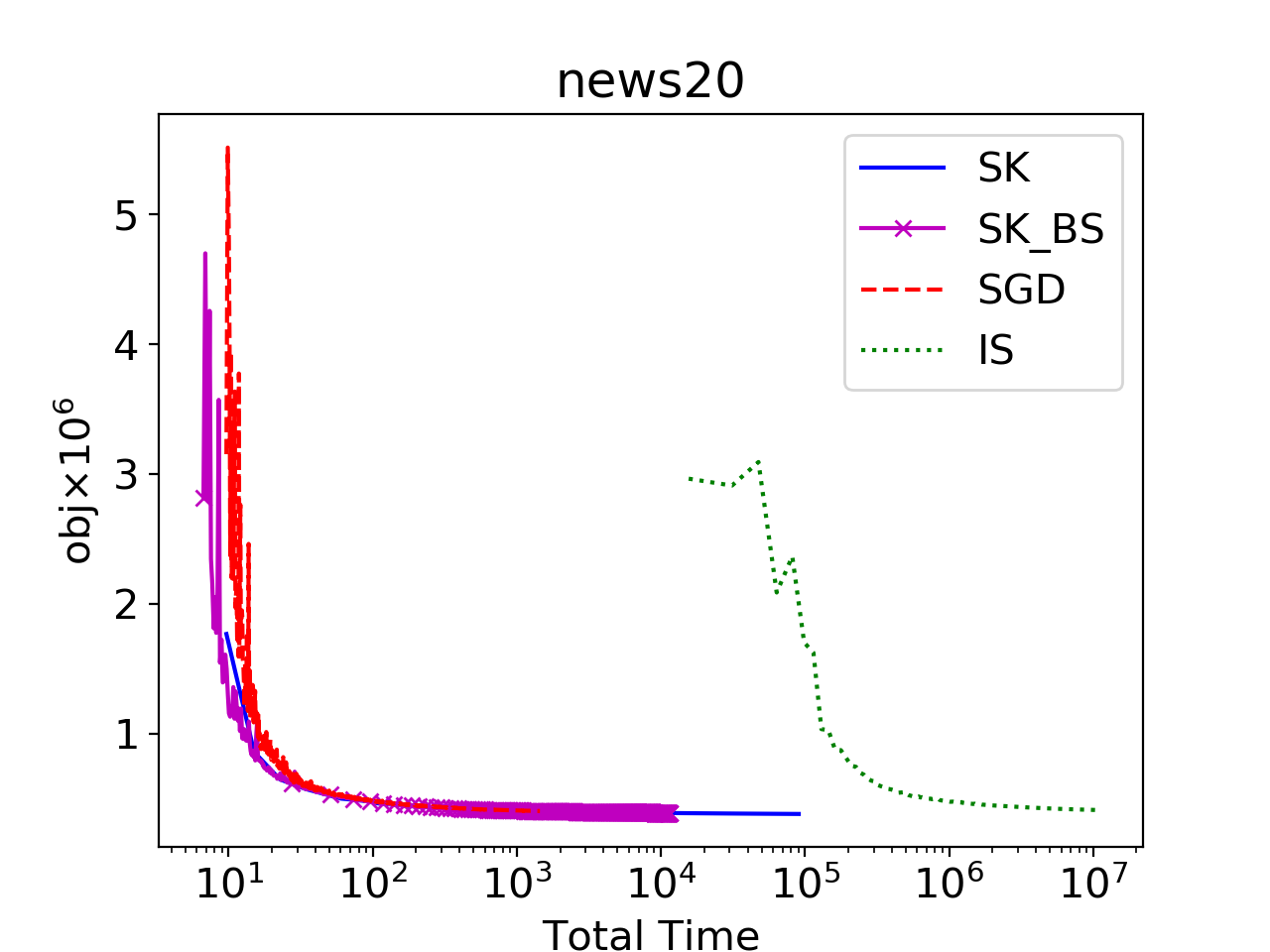}
  \caption{SVM on news20 and rcv1 datasets. In terms of the number of parameter 
    updates, the shrinking method~({\bf SK}), shrinking method with Bernoulli 
    sampling~({\bf SK$\_$BS}), and importance sampling~({\bf IS}) methods have 
comparable performance, and are better than plain SGD. However, because 
of large computational overhead, importance sampling methods are not 
applicable to large datasets. In these experiments, per sample gradient 
magnitude is used as the importance weight distribution for importance sampling 
method and objective function value is used as shrinking criterion.}
  \label{svm_tests}
\end{figure*}
\vspace{-0.2cm}
\subsection{Shrinking experiment under convex setting}

We test the shrinking algorithm on SVM binary classification tasks. The shrinking 
algorithm~({\bf SK}) evaluates the objective function value upon receiving 
every randomly sampled training instance and deciding whether to take a 
training step on it. We compare shrinking~({\bf SK}) with plain SGD and 
importance sampling~({\bf IS}) algorithms as well as shrinking 
algorithm with Bernuli sampling~({\bf SK$\_$BS}). In {\bf SK$\_$BS}, different 
from hard shrinking~({\bf SK}), one does not make immediate decision to ignore a certain 
instance but conduct a Bernuli sampling to make that decision; the detailed 
description can be found in Algorithm~1. The datasets 
are from libsvm~\cite{REF08a,CC01a} and we implemented 
Pegasos~\cite{shalev2011pegasos} algorithm with C++ as the baseline method. 
All shrinking methods and importance sampling method achieve faster 
convergence compared with baseline SGD. However, because of the overhead to update 
importance weights, importance sampling method takes much more time to conduct 
same number of parameter updates, which makes it impractical for large 
datasets. Both shrinking methods have comparable time cost as baseline SGD and 
still have advantage over baseline SGD w.r.t. training time.

\vspace{-0.2cm}
\section{Theoretical Analysis}

In this section, we establish the theoretical convergence of SGD with the
proposed shrinking operations.
It is easy to see that if certain instances are ignored during batch
generation process, the stochastic gradients on that batch would be biased.
Majority of the stochastic gradient methods try to build an
unbiased estimator of the full gradient. Vanilla SGD achieves this through
uniform sampling or random permutation.
Importance sampling
methods~\cite{katharopoulos2017biased,katharopoulos2018not} sample
training batches based on importance weights. To ensure an unbiased
stochastic gradient, an inverse weighted loss is typically used to replace the original
loss in many importance sampling approaches. 
However, it is not necessary for an unbiased gradient estimate
to be used to guarantee convergence of a stochastic gradient method.
Many recent stochastic gradient algorithms are designed without using a
unbiased gradient estimator, such as stochastic averaging
gradients~(SAG)~\cite{schmidt2017minimizing} or asymptotic biased SGD~\cite{tadic2017asymptotic}. 
Generally speaking, a SGD variant is able to converge as long as the difference between the
used estimated gradient and the true gradient converges to zero. In
Figure~\ref{svm_tests}, we have observed that SGD with our proposed
shrinking operations converges to the same objective function values as
vanilla SGD for an L2-regularized linear SVM problem. We now theoretically
prove that SGD with the proposed shrinking operation~\eqref{shrinking_grad} is able to converge at $O(1/k)$ rate for
$\mu$-strongly convex problems.
In particular, with similar assumptions as used to
establish convergence of vanilla SGD~\cite{bottou2018optimization}, 
let us consider a function
$ F(\bw)$ that satisfies the following conditions:
\begin{property}\label{def1}
$F(\bw) := \frac{1}{N}\sum_{i=1}^N f_i(\bw)$ satisfies: 
\begin{itemize}
\item $F$ is $\mu$-strongly convex,
\item $\arg\min_{\bw} F(\bw) \in B_D = \cbr{\bw \mid \norm{\bw} \le D}$, and
\item $\norm{\nabla f_i\rbr{\bw}} \le G,\ \forall \bw \in B_D$
\end{itemize}
\end{property}
Note that the boundedness assumption is motivated from
Pegasos~\cite{shalev2011pegasos}, where a projection step is incorporated to
limit the set of admissible solutions to a ball of radius $1/\mu$.
In Theorem~\ref{thm:shrink_converge_grad} (with similar assumptions as used to
establish convergence of vanilla SGD~\cite{bottou2018optimization}), we show that SGD with our proposed
shrinking techniques converges for $\mu$-strongly convex problem with an $O(1/k)$ convergence
rate.

\begin{theorem}
	\label{thm:shrink_converge_grad}
  Suppose function $F$ satisfies Property~\ref{def1}. 
  For the stochastic gradient descent with the update rule
  $\bw_{k+1} = \bw_{k} - \eta_k \bg_k$, where
  step size $\eta_k=\frac{1}{\mu k}$ and stochastic gradient $\bg_k$ is based
  on a uniformly randomly selected instance index $i$:
  \begin{align}
    \bg_k &= \nabla f_i(\wb_k) - \epsilonb_k,\ i\sim \uniform{1,2,\ldots,N}\\
  \epsilonb_k &= \nabla f_i (\wb_k) \mathbf{I}(\|\nabla f_i(\wb_k)\|\le
  \frac{G}{k})
  \end{align}
  we have
  \begin{align}
    \EE(\|\wb_k-\wb^*\|^2) \le \frac{\max\left( 4D^2, \frac{G^2}{\mu^2} + \frac{4DG}{\mu}\right)}{k}
    \label{eq:shrink-conv}
  \end{align}
  where $w^*$ is the optimal solution.
\end{theorem}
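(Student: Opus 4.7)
The plan is to run a Pegasos-style recurrence argument, but with extra care to absorb the bias introduced by the shrinking rule. I would start from the standard one-step expansion
\[
\norm{\wb_{k+1}-\wb^*}^2 = \norm{\wb_k-\wb^*}^2 - 2\eta_k \langle \bg_k, \wb_k-\wb^*\rangle + \eta_k^2 \norm{\bg_k}^2,
\]
and take expectation over the random index $i$ conditional on $\wb_k$. Because $\bg_k = \nabla f_i(\wb_k) - \epsilonb_k$ is biased, splitting gives
\[
\EE_i \langle \bg_k, \wb_k-\wb^*\rangle = \langle \nabla F(\wb_k), \wb_k-\wb^*\rangle - \EE_i \langle \epsilonb_k, \wb_k-\wb^*\rangle.
\]
For the first term I use $\mu$-strong convexity and $\nabla F(\wb^*)=0$ to obtain the lower bound $\mu \norm{\wb_k-\wb^*}^2$. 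The crucial observation for the second term is that, by construction of $\epsilonb_k$, the shrinking indicator is non-zero only when $\norm{\nabla f_i(\wb_k)} \le G/k$, so $\norm{\epsilonb_k}\le G/k$ deterministically; Cauchy--Schwarz then bounds the bias by $(G/k)\norm{\wb_k-\wb^*}$.

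Next I would bound $\norm{\bg_k}^2 \le \norm{\nabla f_i(\wb_k)}^2 \le G^2$ (the shrinking only zeros the gradient, so the norm cannot grow), and invoke the Pegasos-style boundedness argument to justify $\norm{\wb_k - \wb^*} \le 2D$ (both $\wb_k$ and $\wb^*$ lie in $B_D$). Substituting $\eta_k = 1/(\mu k)$ and collecting terms yields the clean recurrence
\[
a_{k+1} \;\le\; \rbr{1-\tfrac{2}{k}}\, a_k + \frac{1}{k^2}\rbr{\frac{G^2}{\mu^2} + \frac{4DG}{\mu}},
\qquad a_k := \EE \norm{\wb_k-\wb^*}^2.
\]
Setting $M := G^2/\mu^2 + 4DG/\mu$ and $C := \max(4D^2, M)$, I would close the argument by induction on $k$. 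The base case $a_1 \le 4D^2 \le C$ follows from $\wb_1,\wb^* \in B_D$. For the inductive step, after substituting $a_k \le C/k$ into the recurrence, the inequality $a_{k+1} \le C/(k+1)$ reduces to checking $(M-C)k + (M-2C) \le 0$, which holds because $M \le C$ and $2C \ge M$.

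The main obstacle is step two — controlling the bias term without breaking the $O(1/k)$ rate. A naive application of Cauchy--Schwarz against a generic upper bound on $\norm{\epsilonb_k}$ (e.g.\ $G$) would inflate the drift to order $\eta_k G \cdot 2D = O(1/k)$, which is fatal. The proof works only because the shrinking threshold scales like $G/k$, so the bias contribution is $\eta_k \cdot (G/k) \cdot 2D = O(1/k^2)$, matching the variance term $\eta_k^2 G^2$ and allowing the induction to close. Everything else (the strong convexity inequality, the boundedness of $\wb_k$, and the final algebra) is routine once this bias-to-noise balance is recognized.
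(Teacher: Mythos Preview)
Your proposal is correct and follows essentially the same route as the paper: the same one-step expansion, the same use of $\mu$-strong convexity to get $\langle \nabla F(\wb_k),\wb_k-\wb^*\rangle \ge \mu\norm{\wb_k-\wb^*}^2$, the same Cauchy--Schwarz bound on the bias term exploiting $\norm{\epsilonb_k}\le G/k$ and $\norm{\wb_k-\wb^*}\le 2D$, the same recurrence $a_{k+1}\le (1-2/k)a_k + (G^2/\mu^2+4DG/\mu)/k^2$, and the same induction to close. Your algebraic verification of the inductive step is a touch more explicit than the paper's, but the argument is identical in substance.
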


\proof[Proof of Theorem~\ref{thm:shrink_converge_grad}]
  From strong convexity, we have:
  \begin{align}
  F(\wb^*) - F(\wb_k) \ge \langle \nabla F(\wb_k), \wb^* - \wb_k \rangle +
  \frac{\mu}{2}\|\wb_k - \wb^*\|^2 \notag \\ 
  F(\wb_k) - F(\wb^*) \ge \langle \nabla F(\wb^*), \wb_k - \wb^* \rangle +
  \frac{\mu}{2}\|\wb_k - \wb^*\|^2. \notag
  \end{align}
  Adding the above inequalities gives:
  \begin{align}
    \langle \nabla F(\wb_k) - \nabla F(\wb^*), \wb_k - \wb^*\rangle &\ge \mu\| \wb_k - \wb^*\|^2 \notag\\
 \Rightarrow\quad\quad\quad \langle \nabla F(\wb_k), \wb_k - \wb^*\rangle &\ge \mu\| \wb_k - \wb^*\|^2\notag\\
 \Rightarrow\ \quad \langle \E(\nabla f_i(\wb_k)), \wb_k - \wb^*\rangle &\ge \mu\| \wb_k - \wb^*\|^2 \label{ineq1}
  \end{align}
  Next, we have:
  \begin{align}
     & \E(\|\wb_{k+1}-\wb^*\|^2)
     = \E(\|\wb_{k} - \eta_k \gb_k - \wb^*\|^2) \notag\\
     =& \E(\|\wb_k-\wb^*\|^2) - 2\eta_k \E\langle \gb_k, \wb_k - \wb^*\rangle + \eta_k^2
     \E(\|\gb_k\|^2) \notag\\
     \le& \E(\|\wb_k-\wb^*\|^2) - 2\eta_k \E\langle \nabla f_i(\wb_k), \wb_k - \wb^*\rangle
      \notag\\
     & + \eta_k^2 G^2 + 2\eta_k \E\langle \epsilonb_k, \wb_k - \wb^*\rangle \label{next_distance}
  \end{align}
  The last term from \eqref{next_distance} can be upper bounded as follows:
  \begin{align}
     & 2\eta_k \E\langle \epsilonb_k, \wb_k - \wb^*\rangle \notag \\
     =& 2\eta_k \E\left(\langle \nabla f_i(\wb_k), \wb_k - \wb^*\rangle \mathbf{I}\left(
     \|\nabla f_i(\wb_k)\| \le \frac{G}{k}\right)\right)  \notag\\
     \le & 2\eta_k \E\left( \|\nabla f_i(\wb_k)\| \|\wb_k - \wb^*\| \mathbf{I}\left(
     \|\nabla f_i(\wb_k)\| \le \frac{G}{k}\right)\right)  \notag\\
     \le & 2\eta_k \frac{2DG}{k}\label{ineq4}
  \end{align}
  Substituting \eqref{ineq1} and \eqref{ineq4} into \eqref{next_distance}, we have:
  \begin{align}
     & \E(\|\wb_{k+1}-\wb^*\|^2)
     = \E(\|\wb_{k} - \eta_k \gb_k - \wb^*\|^2) \notag\\
   \le& \E(\|\wb_k-\wb^*\|^2) (1-\frac{2}{k}) + \left(\frac{4DG}{\mu} + \frac{G^2}{\mu^2}\right) \frac{1}{k^2}
  \end{align}
  Let $L = \max(\|\wb_1-\wb^*\|, \frac{G^2+4DG\mu}{\mu^2})$, the $O(1/k)$
  convergence \eqref{eq:shrink-conv} can be established by induction. When $k=1$, the result holds:
  \begin{align}
    \E(\|\wb_1-\wb^*\|^2) \le \frac{\max\left( 4D^2, \frac{G^2}{\mu^2}+\frac{4DG}{\mu} \right)}{1}
  \end{align}
  Suppose the result holds for $k$, then for $k+1$:
  \begin{align}
     & \E(\|\wb_{k+1}-\wb^*\|^2)
     = \E(\|\wb_{k+1} - \eta_k \gb_k - \wb^*\|^2) \notag\\
     \le& \left(1-\frac{2}{k}\right)\frac{L}{k} + L\frac{1}{k^2} = \frac{k-1}{k^2}L \le \frac{L}{k+1}
  \end{align}
\QED

Theorem~\ref{thm:shrink_converge_grad}
indicates that in the strongly convex case, shrinking based on gradient
magnitude is able to achieve the same convergence as SGD even though the
stochastic gradient is biased. For many ML models, computation of objective
function value is cheaper than computation of gradients. Also, for $\mu$-strongly
convex $F(\wb)$ which is $M$-Lipschitz smooth we have:
\begin{align}
\|\nabla F(\wb)-\nabla F(\wb^*)\|^2&=\|\nabla F(\wb)\|^2 \le M^2\|\wb-\wb^*\|^2\notag\\
F(\wb)-F(\wb^*) &\ge \frac{\mu}{2}\|\wb-\wb^*\|^2 \notag
\end{align}
Thus, gradient
magnitude and objective function value is the same in terms of measuring instance
triviality:
$$
\mu (F(\wb)-F(\wb^*)) \le \frac{1}{2}\|\nabla F(\wb)\|^2 \le \frac{M^2}{\mu}
(F(\wb) - F(\wb^*))
$$
where the left inequality is the Polyak-Lojasiewicz inequality. Due to these reasons, in practice we
can usually use the loss function value as the shrinking criterion.

\vspace{-0.2cm}
\section{AutoAssist: Learning with an Assistant}\label{autoassist}
The goal of instance shrinking is to reduce the model training 
time, especially for tasks with large number of instances and complex models 
such as deep learning. Although the shrinking method is able to greatly reduce 
the number of updates needed to converge, the 
improvement is not that obvious in terms of training time. The reason is that we need almost
the same amount of computation to decide whether to ignore an instance as to conduct the gradient 
update. Thus the reduced time for parameter updating is compensated by the 
overhead introduced by the decision procedure for shrinking. To reduce such overhead, we 
need to have a cheaper way to make shrinking decisions. Specifically for deep 
learning, using a light weight model for instance shrinking can save us a 
lot of computation. 
In this section, we propose a training framework, named \autoassist, 
that trains a light weight model to make the shrinking decision for deep 
learning models.
\vspace{-0.2cm}
\subsection{\assistant for instance selection}

In many machine learning applications, 
it is observed that data that follows a certain pattern can be 
better handled than others. For example, in image 
classification tasks, at a certain training phase, a 
shirt and a pair of jeans may be well distinguished, but a pair of shorts may be 
confused with a pair of jeans. Also, in machine translation tasks, 
sentences containing certain ambiguous tokens may not be translated well, 
while those with precise meanings may be handled very well. Many such patterns 
can be learned through very simple models, such as a shallow convolutional 
network or a bag-of-words classifier. 
Motivated by these observations, we propose to attach a light-weight model~(\assistant) to the major 
deep neural network~(\boss) to assist the \boss with selecting informative training instances. 

The traditional deep learning training pipeline includes two major parts: the batch generator~(BG) 
and the forward propagation~(FP) / backward propagation~(BP) machine. A vanilla 
batch generator iterates through the whole dataset which is randomly permuted 
after each epoch. In the \autoassist training framework, a \boss~(FP/BP 
machine) and an \assistant~(machine learned batch generator) work together. 
The \assistant is constructed such that it can (1) choose training batches cleverly to 
boost \boss training while (2) introducing low overhead.
To resolve (1) we let the \assistant learn from the performance of the \boss on 
different examples and generate most informative training batch at each 
training stage. Also, because of the design of \autoassist, it is possible to 
use a CPU/GPU parallel learning scheme that minimizes \assistant overhead to \boss.
\vspace{-0.2cm}
\subsection{Learning to generate smart training batches}
The ideal \assistant would evolve with the \boss and make accurate selections 
based on the latest behavior of the \boss.
At the beginning of training, the 
\assistant presents easy examples to \boss in order to get better convergence. 
As boss converges, the easy instances become less and less informative and the 
\assistant gradually decreases the presentation ratio of easy instances and 
increases the difficulty of instances in the batch according to the ability of \boss to handle 
them. 
Specifically, we design the \assistant network~($g(\cdot)$) to be a classifier parameterized by $\phib$ 
that tries to predict the difficulty of instances by minimizing: 
\begin{align}
  \min_{\phib}  \frac{1}{N}\sum_{i=0}^N \hat{L}( g(\xb_i, \yb_i, \phib), z_i) \label{sec-loss}
\end{align} 
where $\hat{L}$ is the cross entropy loss and $z_i$ is the binary label indicating if this is a trivial instance. 
One possible definition of $z_i$ is:
\[
z_i=\left\{\begin{array}{ll}
1                  &, \;\;\; f(\xb_i,\yb_i, \thetab) > T\\
0                  &, \;\;\; {\mbox{\rm otherwise}}.
\end{array}\right.
\]  
where $T$ is the threshold for instance shrinking and 
$f(\xb,\yb,\thetab)$ is the objective function value of the \boss network parameterized by $\thetab$.
By jointly learning \boss and \assistant, we can 
guarantee that the \assistant network is trained on the latest labels, i.e. the $z_i$'s 
generated by the latest \boss model $f(\xb_i, \yb_i,\thetab)$. 
The training batch $B\subset [N]$ is then
generated via a series of Bernoulli samplings with a smoothing term~$\gamma$. 
\begin{algorithm}
  \label{alg:batch}
  \begin{itemize}
    \item {\bf Input:} Training dataset $D=\{\xb_i, \yb_i\}_{i=1}^N$, base 
      probability $\gamma$
    \item {\bf Output:} sampled batch index $B\subset[N]$ 
    \item {\bf Initialize:} $B=[]$
    \item While $B$.size() $<$ batch$\_$size:
      \begin{itemize}
        \item $idx \sim \texttt{uniformInt}(N)$
        \item $c \sim \texttt{uniform}(0,1)$
        \item If $c < \gamma$: 
          \begin{itemize}
            \item B.append($idx$)
          \end{itemize}
        \item Else:
          \begin{itemize}
            \item $\hat{c}=\frac{c-\gamma}{1-\gamma}$ // another \texttt{uniform}(0,1) variable
            \item If $\hat{c}<g(\xb_i, \yb_i,\phib)$:
            \begin{itemize}
              \item B.append($idx$)
            \end{itemize}
          \end{itemize}
      \end{itemize}
    \item {\bf Return} $B$
  \end{itemize}
  \caption{Assistant.sample$\_$batch}
\end{algorithm}

Noting that $p_i$ and $\frac{p_i-\gamma}{1-\gamma}$ are independent variables,
we only need to evaluate $g_i$ upon receiving a random sampled 
index $i$. In practice, index $i$ loops over the randomly shuffled training data 
index list rather than sampled uniformly. Thus a well-trained \assistant will 
save us a lot of computation time by skipping trivial instances, thus accelerating convergence.
Usually the \assistant model is chosen to be light weight and may share 
knowledge from the \boss. For example, in image classification tasks, the 
\assistant may be constructed as a convolution layer followed by a pooling 
layer and a single linear layer, where the convolution layer shares parameter s
with the \boss. 
\vspace{-0.2cm}
\subsection{Asynchronous joint learning with CPU/GPU}
\begin{figure*}[t]
  \centering
  \includegraphics[width=0.45\textwidth]{\fig/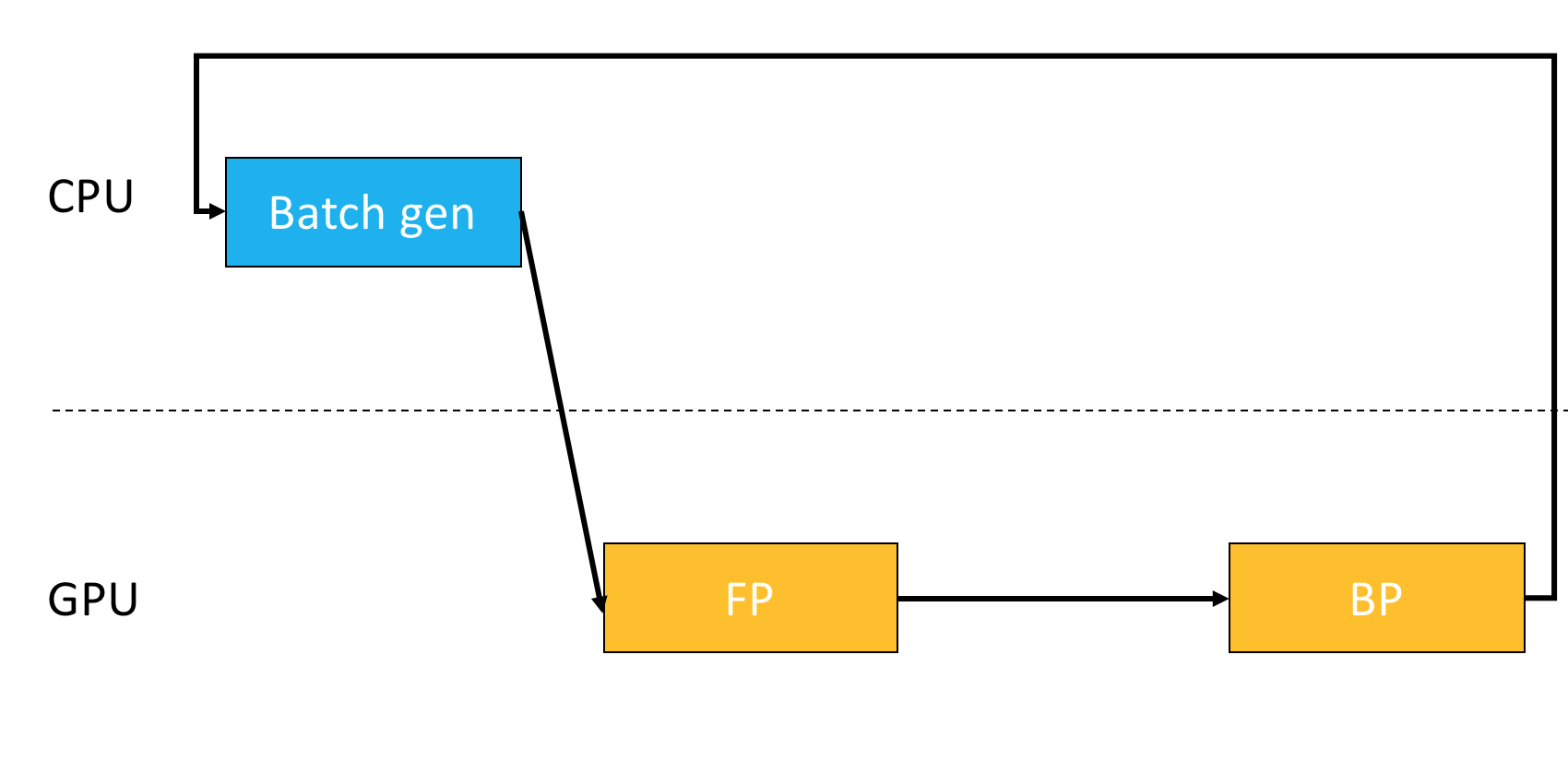}
  \includegraphics[width=0.45\textwidth]{\fig/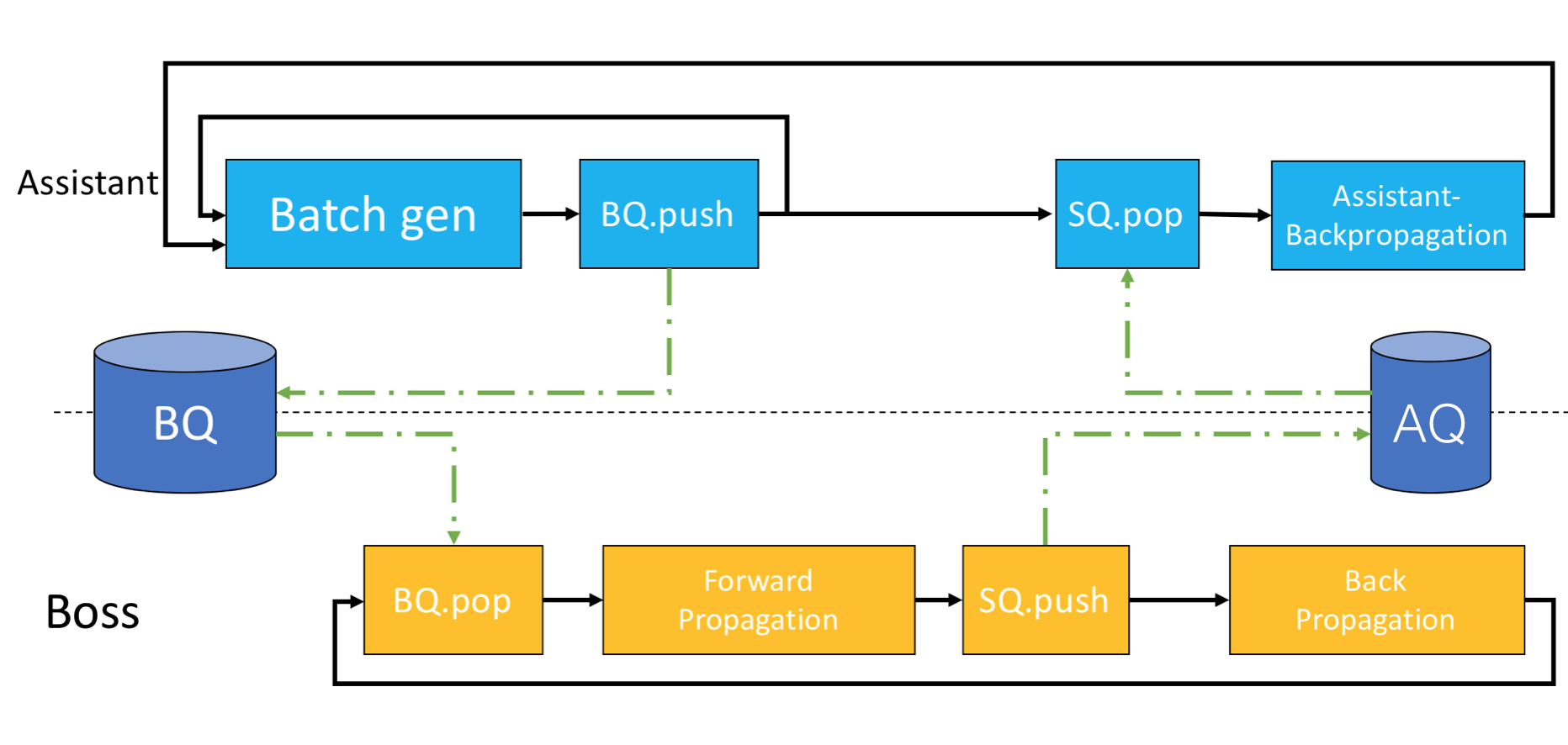}
\label{pipeline-fig}
\caption{The computation scheme of vanilla training on the left wastes CPU/GPU 
  cycles, while the accelerated 
  training process with \autoassist trains both the \boss and the \assistant asynchronously.} 
\end{figure*}
Another concern is the concept that \textit{a good \assistant will best utilize the 
time schedule of \boss}. 
That is we want the \assistant network to introduce as 
little overhead to the training of the \boss network as possible. 
In traditional mini-batch training of deep networks, the batch generation on CPU 
and training on GPU are done sequentially. Thus at least one of the CPU/GPU is 
idle at each moment during training. In our \autoassist framework, we propose a 
training algorithm where both the CPU and GPU work asynchronously on their 
jobs. This reduces the overhead of batch generation or \assistant training to 
GPU time and could potentially be generalized to multi-CPU setting. 
Specifically, we maintain two queues of mini-batches:
\begin{align}
  \text{BossQueue}&=\{\{\xb_i,\yb_i\},\ldots\}\\ 
  \text{AssistantQueue}&=\{\{i,f(\xb_i, \yb_i, \thetab)\},\ldots\}
\end{align}
During each mini-batch training step, \boss obtains a batch of instances 
$\{\xb_i,\yb_i\}_{i\in B}$ from 
{\bf BossQueue}~(BQ) and conducts the forward propagation on $B$ to obtain 
losses $\{f(\xb_i, \yb_i, \thetab)\}$. Then the \boss pushes the 
index-loss pairs $\{i, f(\xb_i, \yb_i, \thetab)\}_{i\in B}$ to {\bf AssistantQueue}~(AQ). 
\textit{Assistant} on the other hand, trains on items from {\bf AssistantQueue} 
and sample training batch and push to {\bf BossQueue}. 
In most cases, the training step of \boss is much more expensive than that of 
\assistant thus it is easy for \assistant
to keep {\bf BossQueue} non-empty at all times. The only overhead on \boss's 
timeline is pushing the loss information to AQ, which is minimal work 
considering that each instance only consists of two scalars~(index, loss).

\vspace{-0.2cm}
\section{Experimental results}
In this section we present the experimental results. We tested the \autoassist 
model on two different tasks: image classification and neural machine translation. 
All experiments are done on Tesla V100 GPUs with implementation in 
PyTorch~\cite{paszke2017automatic}. 

\subsection{Image classification}
We tested our models on MNIST~\cite{lecun-mnisthandwrittendigit-2010} hand 
written digit dataset, 
FashionMNIST~\cite{xiao2017fashion} fashion item image dataset, Extended MNIST~\cite{cohen2017emnist} and 
CIFAR10~\cite{krizhevsky2009learning} datasets. A logistic regression model is used as the \assistant network and 
ResNet~\cite{he2016deep} as the \boss network.  
\autoassist is constructed to optimize the binary classification objective~\eqref{sec-loss}.
Adam~\cite{kingma2014adam} is used as the optimizer for all models. 
  \begin{figure}[t]
  \centering
  \includegraphics[width=0.42\textwidth]{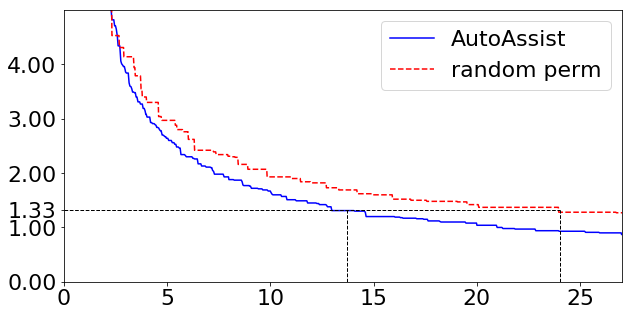} 
  \includegraphics[width=0.42\textwidth]{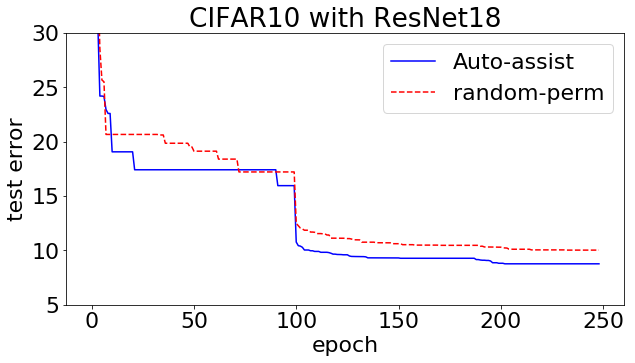} 
  \label{image-epoch}
  \caption{Test accuracy on MNIST and CIFAR10 datasets --- to reach error 
  1.33 on MNIST,  \autoassist needs only about half the number of epochs as random permutation.}
  \end{figure}

Adding Auto-assist is able to improve the final test accuracy on MNIST and CIFAR10 
datasets and enables faster convergence. On the MNIST dataset, to achieve $98.67\%$ test accuracy, the baseline 
approach takes 24 epochs through the entire data, while it takes Auto-assist just 13.7 epochs 
to achieve similar accuracy. On other image 
classification datasets, \autoassist is also able to improve the final test 
accuracy. 
\begin{table}[h]
\centering
\begin{tabular}{l*{3}{c}r}
 {\bf Data}      & {\bf random-perm } &  {\bf \textit{\autoassist}}\\
\hline
MNIST    &98.67 & {\bf 99.17} \\
FashionMNIST &90.68 & {\bf 90.99} \\
ExtendedMNIST & 87.21 & {\bf 87.28} \\
CIFAR10    & 90.04 & {\bf 90.23}
\end{tabular}
\caption{Accuracy with/without \assistant.}
\end{table}

\subsection{Machine Translation}
For machine translation tasks, we tested our \autoassist model with two 
translation datasets: the English-German Image 
Description dataset~\cite{elliott-EtAl:2016:VL16} and WMT~2014 English-German 
dataset. The \boss model is chosen to be the transformer 
model~\cite{vaswani2017attention} and the \assistant is chosen to be a bag of 
word classifier. For the smaller dataset Multi30K, which consists of around 30k 
language pairs, we train the transformer \textit{base} model on a single GPU 
machine and used batch size of 64 sentences. Besides the vanilla random 
shuffling batch generation method, we also compared with the 
self-paced-learning~(SPL) algorithm~\cite{kumar2010self,li2017self} and choose 
the best pace step $q(t)$ over multiple tests. 

\begin{figure}[t]
\centering
\includegraphics[width=0.42\textwidth]{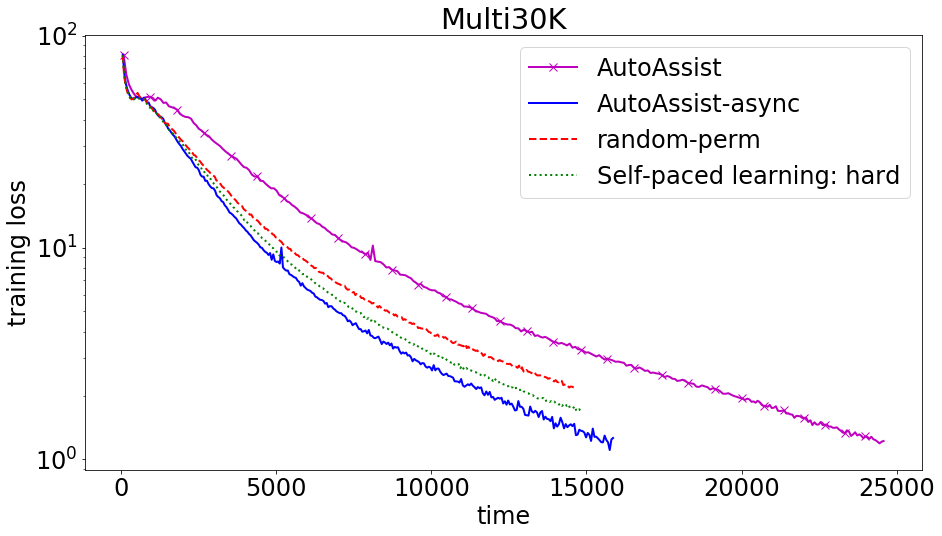}
\includegraphics[width=0.42\textwidth]{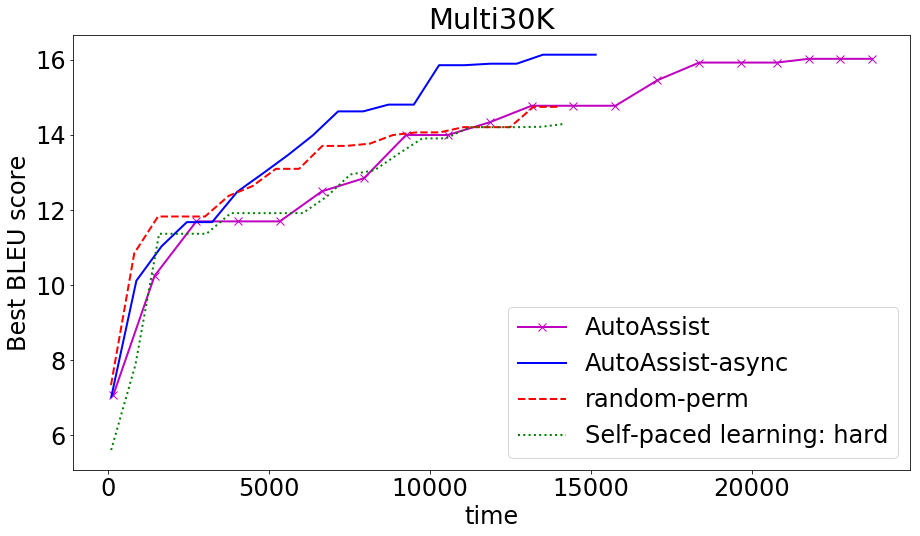} 
\label{nmt-fig1}
\caption{The training error and BLEU score on Multi30k dataset. \autoassist 
with CPU/GPU asynchronous training converges faster and achieves higher BLEU score.}
\end{figure}

Table~\ref{table-time} shows the time required to complete one epoch on the Multi30k 
dataset. With the proposed asynchronous training pipeline, the time overhead 
introduced by \autoassist is only $5\%$ while the sequential implementation has $61\%$ overhead.
\begin{table}[h]
\centering
\begin{tabular}{l*{3}{c}r}
{\bf Model}    &  sentence-pairs/second\\
\hline
{\bf baseline}  & 617.3   \\
{\bf AutoAssist}  & 382.7   \\
{\bf AutoAssist-async}  & 586.0   
\end{tabular}
\caption{Number of sentence pairs trained per second. CPU/GPU 
asynchronous training is able to decrease $92\%$ of the time overhead.}
\label{table-time}
\end{table}

On the WMT 2014 English to German dataset consisting of about 4.5 million
sentence pairs, we trained the transformer model with the \textit{base} setting 
in~\cite{vaswani2017attention} and constructed source vocabulary and target 
vocabulary of size 40k and 43k respectively. Both vanilla and \autoassist 
models are trained on 8 Tesla V100 GPUs with around 40k tokens per training 
batch~(5k tokens per batch per GPU). For the vanilla model, batches are pre-generated and randomly shuffled at the
beginning of each epoch. For the \autoassist model, data are split into 8 chunks 
with the same number of tokens and 8 \assistant models are trained simultaneously to 
generate batches for each GPU. Each \assistant will thus be trained on a 
subset of the data and generate training batch from that chunk of data. We 
generate translation results with beam size of 4 and length penalty of 0.6. We 
are able to obtain 27.1 BLEU score with both the vanilla and \autoassist model.s 
The \autoassist is able to achieve higher BLEU score with fewer 
tokens seen. For example, the vanilla model reaches BLEU score of 26 after 
training on 1.82 billion tokens while the \autoassist only needs 1.18 billion 
tokens. This means that the instances the \assistant decides to ignore have 
little contribution to the convergence and thus can be ignored with no harm.
\begin{figure}[H]
\centering
\includegraphics[width=0.42\textwidth]{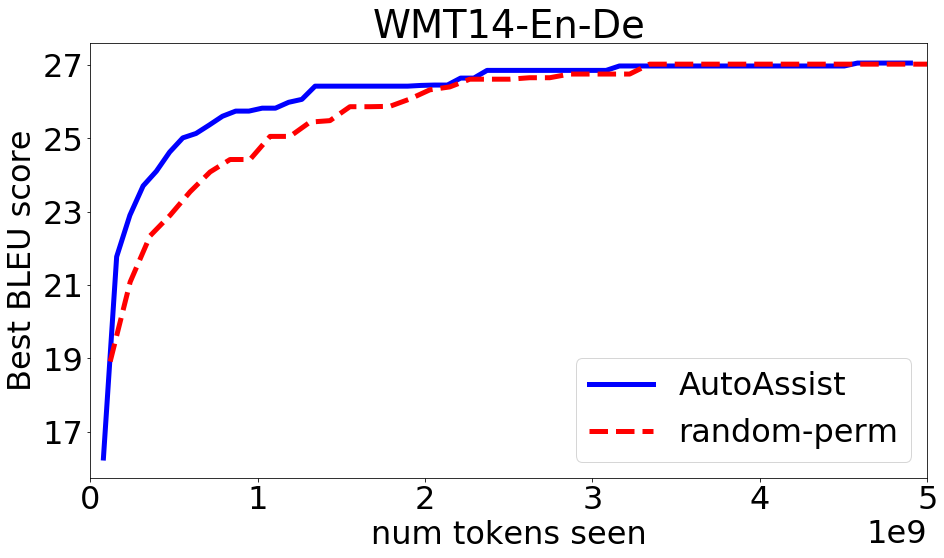} 
\label{nmt-fig1}
\caption{BLEU score on WMT14 En-De dataset}
\end{figure}

\begin{table}[H]
\centering
\begin{tabular}{l*{3}{c}r}
 {\bf Model}      & {\bf vanilla } &  {\bf \textit{Auto-assist}}\\
\hline
avr $\#$ tokens per epoch    & 12.1M  & 7.87M \\
\hline
BLEU reach 26 & $1.82\times 10^9$ & {\bf 1.18$\times10^9$} \\
BLEU reach 27 & $3.22\times 10^9$ & {\bf 3.01$\times10^9$} \\
\hline
\end{tabular}
\caption{Number of tokens needed to reach different BLEU score level. \autoassist 
is able to save $35\%$ tokens to reach 26 and $6\%$ to reach 27.}
\end{table}

\section{Conclusions}

In this paper, motivated by the dual coordinate shrinking method in SVM dual coordinate 
descent, we propose a training framework to accelerate deep learning model training.
The proposed framework, Auto-assist, jointly 
trains a batch generator~(\assistant) along with the main deep learning model~(\boss). 
The \assistant model conducts primal instance shrinking to get rid of trivial instances 
during training and can automatically adjust the criteria based on the ability 
of the \boss. In the strongly convex setting, even though our method leads to 
biased gradients, we should that stochastic gradient with instance shrinking has 
$O(\frac{1}{k})$ convergence, which is the same as plain SGD. 
To maximize the training efficiency of CPU/GPU cycles, we let the \assistant learn from the output of the \boss 
in an asynchronous parallel scheme.
We further propose a method to reduce the computational overhead by training \assistant and \boss 
asynchronously on CPU/GPU.
Experiments show that both convergence and accuracy could be improved through 
introducing our \assistant model. The CPU/GPU asynchronous 
training pipeline is able to reduce the overhead to less than $10\%$ 
compared with the sequential pipeline.

\newpage
\bibliographystyle{plain}
\bibliography{references}

\newpage
\onecolumn
\begin{appendices}	
\section{Algorithms for CPU/GPU parallelization}

\begin{algorithm}[h]
\label{alg:assistant}
\begin{itemize}
  \item {\bf Input:} Training dataset $D=\{\xb_i, \yb_i\}_{i=1}^N$ 
\item {\bf Initialize:} BossQueue, AssistantQueue, Assistant
  \item While {\bf True}:
    \begin{itemize}
      \item If BossQueue.size()$<k$:
        \begin{itemize}
          \item B = Assistant.sample$\_$batch()
          \item BossQueue.enqueue(B)
        \end{itemize}
      \item Else if not AssistantQueue.empty():
        \begin{itemize}
          \item \Mb = AssistantQueue.pop()
          \item grad = Assistant.gradient(\Mb)
          \item Assistant.update(grad)
        \end{itemize}
    \end{itemize}
\end{itemize}
\caption{algorithm for \textit{assistant}~(CPU)}
\end{algorithm}

\begin{algorithm}[h]
\label{alg:boss}
\begin{itemize}
  \item {\bf Input:}
\item {\bf Initialize:} Boss
  \item While {\bf True}:
    \begin{itemize}
      \item If not BossQueue.empty():
        \begin{itemize}
          \item B = BossQueue.pop()
          \item M = Boss.Forward(B)
          \item AssistantQueue.enqueu(M)
          \item grad = Boss.Backward(M)
          \item Boss.update(grad)
        \end{itemize}
    \end{itemize}
\end{itemize}
\caption{algorithm for \boss~(GPU)}
\end{algorithm}

\end{appendices}

\end{document}